\definecolor{lightred}{HTML}{B85450}
\definecolor{lightblue}{HTML}{6C8EBF}
\definecolor{lightgreen}{HTML}{82B366}
\definecolor{MyPurple}{RGB}{111,0,255}
\newtheorem{theorem}{Theorem}
\def \fanshu{\futurelet\next\fanshuAux}
\def \fanshuAux{\ifx\next[
	\expandafter \fanshuOpt
	\else
	\expandafter \fanshuNoOpt
	\fi
}
\def \fanshuOpt[#1]#2{{ \left\Vert #2 \right\Vert}_{#1}}
\def \fanshuNoOpt#1{ { \left\Vert #1 \\right\Vert} }
\newcommand{\on}[1]{\operatorname{#1}}
\title{Decentralized Policy Optimization}
\author{%
   Kefan Su \\
   School of Computer Science\\
   Peking University\\
   \texttt{sukefan@pku.edu.cn} \\
   \And
   Zongqing Lu \\
   School of Computer Science\\
   Peking University\\
   \texttt{zongqing.lu@pku.edu.cn} \\
}
\begin{document}

\maketitle
\begin{abstract}
The study of decentralized learning or independent learning in cooperative multi-agent reinforcement learning has a history of decades. Recently empirical studies show that independent PPO (IPPO) can obtain good performance, close to or even better than the methods of centralized training with decentralized execution, in several benchmarks. However, decentralized actor-critic with convergence guarantee is still open. In this paper, we propose \textit{decentralized policy optimization} (DPO), a decentralized actor-critic algorithm with monotonic improvement and convergence guarantee. We derive a novel decentralized surrogate for policy optimization such that the monotonic improvement of joint policy can be guaranteed by each agent \textit{independently} optimizing the surrogate. In practice, this decentralized surrogate can be realized by two adaptive coefficients for policy optimization at each agent. Empirically, we compare DPO with IPPO in a variety of cooperative multi-agent tasks, covering discrete and continuous action spaces, and fully and partially observable environments. The results show DPO outperforms IPPO in most tasks, which can be the evidence for our theoretical results.
\end{abstract}

\section{Introduction}
    In cooperative multi-agent reinforcement learning (MARL), centralized training with decentralized execution (CTDE) has been the primary framework \citep{MADDPG,COMA,VDN,QMIX,QPLEX,FOP,MAPPO}. Such a framework can settle the non-stationarity problem with the centralized value function, which takes as input the global information and is beneficial to the training process. Conversely, decentralized learning has been paid much less attention. The main reason may be that there are few theoretical guarantees for decentralized learning and the interpretability is insufficient even if the simplest form of decentralized learning, \textit{i.e.}, independent learning, can obtain good empirical performance in several benchmarks \citep{benchmark}. However, decentralized learning itself still deserves attention as there are still many settings in which the global information cannot be accessed by each agent, and also for better robustness and scalability \citep{MARL-survey}. Moreover, the idea of decentralized learning is direct, comprehensible, and easy to realize in practice.
    
    Independent Q-learning (IQL) \citep{IDQN} and independent PPO (IPPO) \citep{IPPO} are the straightforward decentralized learning methods for cooperative MARL, where each agent learns the policy by DQN \citep{DQN} and PPO \citep{PPO} respectively. Empirical studies \citep{IPPO,MAPPO,benchmark} demonstrate that these two methods can obtain good performance, close CTDE methods. Especially, IPPO can outperform several CTDE methods in a few benchmarks, including MPE \citep{MADDPG} and SMAC \citep{SMAC}, which shows great promise for decentralized learning. Unfortunately, to the best of our knowledge, there is still no theoretical guarantee or rigorous explanation for IPPO, though there has been some study \citep{sun2022monotonic}.
   
    In this paper, we make a further step and propose \textbf{\textit{decentralized policy optimization}} (\textbf{DPO}), a decentralized actor-critic method with monotonic improvement and convergence guarantee for cooperative multi-agent reinforcement learning. Similar to IPPO, DPO is actually \textit{independent learning}, because in DPO each agent optimizes its own objective individually and independently. However, unlike IPPO, such an independent policy optimization of DPO can guarantee the monotonic improvement of the joint policy.
    
    From the essence of fully decentralized learning, we first analyze Q-function in the decentralized setting and further show that the optimization objective of IPPO may not induce the joint policy improvement. Then, starting from the surrogate of TRPO \citep{TRPO} and together considering the characteristics of fully decentralized learning, we introduce a \textit{novel} lower bound of joint policy improvement as the surrogate for decentralized policy optimization. This surrogate can be naturally decomposed for each agent, which means each agent can optimize its individual objective to make sure that the joint policy improves monotonically. In practice, this decentralized surrogate can be realized by two adaptive coefficients for policy optimization at each agent. The idea of DPO is simple yet effective, and suitable for fully decentralized learning.
    
    Empirically, we compare DPO and IPPO in a variety of cooperative multi-agent tasks including a cooperative stochastic game, MPE \citep{MADDPG}, multi-agent MuJoCo \citep{mamujoco}, and SMAC \citep{SMAC}, covering discrete and continuous action spaces, and fully and partially observable environments. The empirical results show that DPO performs better than IPPO in most tasks, which can be evidence for our theoretical results.

\section{Related Work}

\textbf{CTDE.} In cooperative MARL, centralized training with decentralized execution (CTDE) is the most popular framework \citep{MADDPG, MAAC, COMA, VDN, QMIX, QPLEX,FOP,mamujoco}. CTDE algorithms can handle the non-stationarity problem in the multi-agent environment by the centralized value function. One line of research in CTDE is value decomposition \citep{VDN,QMIX,QTRAN,Qatten,QPLEX}, where a joint Q-function is learned and factorized into local Q-functions by the relationship between optimal joint action and optimal local actions. Another line of research in CTDE is multi-agent actor-critic \citep{COMA,MAAC,DOP,FOP,DMAC}, where the centralized value function is learned to provide policy gradients for agents to learn stochastic policies. More recently, policy optimization has attracted much attention for cooperative MARL. PPO \citep{PPO} and TRPO \citep{TRPO} have been extended to multi-agent settings by MAPPO \citep{MAPPO}, CoPPO \citep{COPPO}, and HAPPO \citep{HAPPO} respectively via learning a centralized state value function. However, these methods are CTDE and thus not appropriate for decentralized learning.

\textbf{Fully decentralized learning.} Independent learning \citep{CO-MARLreview} is the most straightforward approach for fully decentralized learning and has actually been studied in cooperative MARL since decades ago. The representatives are independent Q-learning (IQL) \citep{IQL,IDQN} and independent actor-critic (IAC) as \citet{COMA} empirically studied. These methods make agents directly execute the single-agent Q-learning or actor-critic algorithm individually. The drawback of such independent learning methods is obvious. As other agents are also learning, each agent interacts with a non-stationary environment, which violates the stationary condition of MDP. Thus, these methods are not with any convergence guarantee theoretically, though IQL could obtain good performance in several benchmarks \citep{benchmark}. More recently, decentralized learning has also been specifically studied with communication \citep{zhang2018fully,li2020f2a2} or parameter sharing \citep{terry2020revisiting}. However, in this paper, we consider fully decentralized learning as each agent independently learning its policy while being not allowed to communicate or share parameters as in \citet{IDQN,IPPO}. We will propose an algorithm with convergence guarantees in such a fully decentralized learning setting. 

\textbf{IPPO.} TRPO \citep{TRPO} is an important single-agent actor-critic algorithm that limits the policy update in a trust region and has a monotonic improvement guarantee by optimizing a surrogate objective. PPO \citep{PPO} is a practical but effective algorithm derived from TRPO, which replaces the trust region constraint with a simpler clip trick. IPPO \citep{IPPO} is a recent cooperative MARL algorithm in which each agent just learns with independent PPO. Though IPPO is still with no convergence guarantee, it obtains surprisingly good performance in SMAC \citep{SMAC}. IPPO is further empirically studied by \citet{MAPPO,benchmark}. Their results show IPPO can outperform a few CTDE methods in several benchmark tasks. These studies demonstrate the potential of policy optimization in fully decentralized learning, which we will focus on in this paper. \textit{Although there are some value-based algorithms for fully decentralized learning \citep{IQL,HQL,LQL}, most of them are heuristic and follow the principle different from policy-based algorithms, so we will not focus on these algorithms.}

\section{Method}

From the perspective of policy optimization, in fully decentralized learning, we need to find an objective for each agent such that the joint policy improvement can be guaranteed by each agent independently and individually optimizing its own objective. Thus, we propose a novel lower bound of the joint policy improvement to enable \textit{decentralized policy optimization} (DPO). In the following, we first discuss some preliminaries; then we analyze the critic of agent in fully decentralized learning; next we derive the lower bound and the proof for convergence; finally we introduce the practical algorithm of DPO. 

\subsection{Preliminaries}

\textbf{Dec-POMDP.} Decentralized partially observable Markov decision process is a general model for cooperative MARL. A Dec-POMDP is a tuple $\mathcal{G}=\left\{S,A,P,Y,O,I,N,r,\gamma\right\}$. $S$ is the state space, $N$ is the number of agents, $\gamma \in [0,1)$ is the discount factor, and $I = \{1,2\cdots N\}$ is the set of all agents.  $A = A_1 \times A_2 \times \cdots \times A_N$ represents the joint action space, where $A_i$ is the individual action space for agent $i$. $P(s^{\prime} |s,\bm{a} ): S \times A \times S \to [0,1]$ is the transition function, and $r(s,\bm{a} ): S \times A \to \mathbb{R}$ is the reward function of state $s \in S$ and joint action $\bm{a} \in A$. $Y$ is the observation space, and $O(s,i):S \times I \to Y $ is a mapping from state to observation for each agent $i$. The objective of Dec-POMDP is to maximize $J({\bm{\pi}}) = \mathbb{E}_{\bm{\pi}}\left[ \sum_{t = 0} \gamma^t r(s_t,\bm{a}_t ) \right],$ thus we need to find the optimal joint policy ${\bm{\pi}}^{*} = \arg\max_{{\bm{\pi}}} J({\bm{\pi}})$. To settle the partial observable problem, history $\tau_i \in \mathcal{T}_i = (Y \times A_i)^*$ is often used to replace observation $o_i \in Y$. In fully decentralized learning, each agent $i$ independently learns an individual policy $\pi^i(a_i|\tau_i)$ and their joint policy $\bm{\pi}$ can be represented as the product of each $\pi^i$. Though each agent learns individual policy as $\pi^i(a_i|\tau_i)$ in practice, in our analysis, we will assume that each agent could receive the state $s$, because the analysis in partially observable environments is much more difficult and the problem may be undecidable in Dec-POMDP \citep{undecideDECPOMDP}. Moreover, the V-function and Q-function of the joint policy $\bm{\pi}$ are as follows,
\begin{align}
		& V^{{\bm{\pi}}}(s) = \mathbb{E}_{\bm{a} \sim \bm{\pi}}\left[ Q^{{\bm{\pi}}}(s,\bm{a}) \right] \\
		& Q^{{\bm{\pi}}}(s,\bm{a}) = r(s,\bm{a}) + \gamma \mathbb{E}_{s^\prime \sim P(\cdot | s, \bm{a} )}\left[ V^{\bm{\pi}}(s^\prime) \right]. \label{eq:joint-q}
	\end{align} 

\vspace{1mm}
\textbf{Joint TRPO Objective.} In Dec-POMDP, we can still obtain a TRPO objective for the joint policy $\bm{\pi}$ from the theoretical results in single-agent RL \citep{TRPO}, which is referred to as the joint TRPO objective,  
\begin{align}
    & J({\bm{\pi}_{\operatorname{new}} } )-J({\bm{\pi}_{\operatorname{old}} } ) \ge \mathcal{L}^{\operatorname{joint}}_{\bm{\pi}_{\operatorname{old}}}(\bm{\pi}_{\operatorname{new}}) - C \cdot D_{\operatorname{KL}}^{\operatorname{max}}(\bm{\pi}_{\operatorname{old}} \| \bm{\pi}_{\operatorname{new}}) \label{eq:trpo-bound}\\
    & {\text{where}\,\,\,}\mathcal{L}^{\operatorname{joint}}_{\bm{\pi}_{\operatorname{old}}}(\bm{\pi}_{\operatorname{new}}) = \sum_{s} \bm{\rho}_{\operatorname{old}}(s) \sum_{\bm{a}} \bm{\pi}_{\operatorname{new}}(\bm{a} | s) A_{{\on{old}}}(s,\bm{a}), \label{eq:trpo_l}
\end{align}
where $D_{\operatorname{KL}}^{\operatorname{max}}(\bm{\pi}_{\operatorname{old}} \| \bm{\pi}_{\operatorname{new}}) = \max_{s} D_{\operatorname{KL}}(\bm{\pi}_{\operatorname{old}}(\cdot|s) \| \bm{\pi}_{\operatorname{new}}(\cdot|s) ) $, $\bm{\rho}_{\operatorname{old}}(s) = \sum_{t = 0} \gamma^t \operatorname{Pr}(s_t = s| {\bm{\pi}_{\operatorname{old}} })$ is the discounted stationary distribution of the state given $\bm{\pi}_{\operatorname{old}}$, $A_{{\on{old}}}$ is the advantage function under $\bm{\pi}_{\on{old}}$, and $C$ is a constant. 

The joint TRPO objective, \textit{i.e.}, RHS of \eqref{eq:trpo-bound}, is a lower bound for the difference between the new joint policy $\bm{\pi}_{\operatorname{new}}$ and the old joint policy $\bm{\pi}_{\operatorname{old}}$ in term of expected return. Therefore, we can use this objective as a surrogate, and maximizing this surrogate can guarantee that the policy is improving monotonically. However, the joint TRPO objective cannot be directly optimized in fully decentralized learning as this objective is involved in the joint policy, which cannot be accessed in fully decentralized learning.

We will propose a new lower bound (surrogate) for $J({\bm{\pi}_{\operatorname{new}} } )-J({\bm{\pi}_{\operatorname{old}} } )$, which can be optimized in fully decentralized learning. Before introducing our new surrogate, we need to first analyze the critic of agent in fully decentralized learning, which is referred to as decentralized critic.

\subsection{Decentralized Critic}

In fully decentralized learning, each agent learns independently from its own interactions with the environment. Therefore, the Q-function of each agent $i$ is actually the following formula: 
\begin{align}
     Q^{\pi^i}_{\pi^{-i} }(s,a_i) = r_{\pi^{-i} }(s,a_i) + \gamma \mathbb{E}_{a_{-i} \sim \pi^{-i}, s^\prime \sim P(\cdot|s,a_i,a_{-i}), a_i^\prime \sim \pi^i}[Q^{\pi^i}_{\pi^{-i}}(s^\prime,a_i^\prime)],
\end{align}
where $r_{\pi^{-i} }(s,a_i) = \mathbb{E}_{\pi^{-i}}[r(s,a_i,a_{-i})]$, and $\pi^{-i}$ and $a_{-i}$ respectively denote the joint policy and joint action of all agents expect agent $i$.
If we take the expectation $\mathbb{E}_{a_{-i}^\prime \sim \pi^{-i}(\cdot|s^\prime),a_{-i} \sim \pi^{-i}(\cdot|s)}$ over both sides of the Q-function of joint policy (\ref{eq:joint-q}), then we have 
\begin{align*}
    \mathbb{E}_{\pi^{-i}}[Q^{{\bm{\pi}}}(s,a_i,a_{-i})] = r_{\pi^{-i} }(s,a_i) + \gamma \mathbb{E}_{a_{-i} \sim \pi^{-i}, s^\prime \sim P(\cdot|s,a_i,a_{-i}),a_i^\prime \sim \pi^i}\left[ \mathbb{E}_{\pi^{-i}}[Q^{{\bm{\pi}}}(s^\prime,a_i^\prime,a_{-i}^\prime)] \right].
\end{align*}
We can see that $Q^{\pi^i}_{\pi^{-i} }(s,a_i)$ and $\mathbb{E}_{\pi^{-i}}[Q^{{\bm{\pi}}}(s,a_i,a_{-i})]$ satisfy the same iteration. Moreover, we will show in the following that $Q^{\pi^i}_{\pi^{-i} }(s,a_i)$ and $\mathbb{E}_{\pi^{-i}}[Q^{{\bm{\pi}}}(s,a_i,a_{-i})]$ are just the same. 

We first define an operator $\Gamma^{\pi^i}_{\pi^{-i}}$ as follows,
\begin{align*}
    \Gamma^{\pi^i}_{\pi^{-i}}Q(s,a_i) =  r_{\pi^{-i} }(s,a_i) + \gamma \mathbb{E}_{a_{-i} \sim \pi^{-i}, s^\prime \sim P(\cdot|s,a_i,a_{-i}),a_i^\prime \sim \pi^i}[Q(s^\prime,a_i^\prime)].
\end{align*}
Then we will prove that the operator $\Gamma^{\pi^i}_{\pi^{-i}}$ is a contraction.

Considering any two individual Q-functions $Q_1$ and $Q_2$, we have:
\begin{align*}
    \Vert \Gamma^{\pi^i}_{\pi^{-i}}Q_1 - \Gamma^{\pi^i}_{\pi^{-i}}Q_2 \Vert_{\infty} & = \max_{s,a_i} \gamma|\mathbb{E}_{a_{-i} \sim \pi^{-i}, s^\prime \sim P(\cdot|s,a_i,a_{-i}),a_i^\prime \sim \pi^i}[Q_1(s^\prime,a_i^\prime) - Q_2(s^\prime,a_i^\prime)]  | \\
    & \le \gamma \mathbb{E}_{a_{-i} \sim \pi^{-i}, s^\prime \sim P(\cdot|s,a_i,a_{-i}),a_i^\prime \sim \pi^i}[\max_{s^\prime, a_i^\prime}|Q_1(s^\prime,a_i^\prime) - Q_2(s^\prime,a_i^\prime)| ] \\
    & = \gamma \max_{s^\prime, a_i^\prime}|Q_1(s^\prime,a_i^\prime) - Q_2(s^\prime,a_i^\prime)| \\
    & = \gamma \Vert Q_1 - Q_2 \Vert_{\infty}.
\end{align*}
So the operator $\Gamma^{\pi_i}_{\pi_{-i}}$ has one and only one fixed point, which means 
\begin{align*}
    &Q^{\pi^i}_{\pi^{-i} }(s,a_i)=\mathbb{E}_{\pi^{-i}}[Q^{{\bm{\pi}}}(s,a_i,a_{-i})],\\
    &V^{\pi^i}_{\pi^{-i} }(s)=\mathbb{E}_{\pi^{-i}}[V^{{\bm{\pi}}}(s)]=V^{{\bm{\pi}}}(s).
\end{align*}

With this well-defined decentralized critic, we can further analyze the objective of IPPO \citep{IPPO}. In IPPO, the policy objective of each agent $i$ can be essentially formulated as follows:
\begin{align}
    &\mathcal{L}^{i}_{\bm{\pi}_{\operatorname{old}}}(\pi^i_{\operatorname{new}}) = \sum_{s} \bm{\rho}_{\operatorname{old}}(s) \sum_{a_i} \pi^i_{\operatorname{new}}(a_i|s) A^i_{\operatorname{old}}(s,a_i),
    \label{eq:init-obj}\\
    &{\text{where}\,\,\,}A^i_{\operatorname{old}}(s,a_i) = Q^{\pi_{\operatorname{old}}^i }_{\pi_{\operatorname{old}}^{-i} }(s,a_i) - \mathbb{E}_{\pi_{\operatorname{old}}^i}[Q^{\pi_{\operatorname{old}}^i }_{\pi_{\operatorname{old}}^{-i} }(s,a_i)] = \mathbb{E}_{\pi^{-i}_{\operatorname{old}}}[A_{\operatorname{old}}(s,a_i,a_{-i})]. \notag
\end{align}

However, \eqref{eq:init-obj} is different from \eqref{eq:trpo_l} in the joint TRPO objective. Thus, directly optimizing \eqref{eq:init-obj} may not improve the joint policy, and thus cannot provide any guarantee for convergence, to the best of our knowledge. Nevertheless, it seems that $A^i_{\operatorname{old}}(s,a_i)$ is the only advantage formulation that can be accessed by each agent in fully decentralized learning. So, the policy objective of DPO will be derived on \eqref{eq:init-obj} but with modifications to guarantee convergence, and we will introduce the detail in the next section. In the following, we discuss how to compute this advantage in practice in fully decentralized learning. 

As we need to calculate $A^i_{\operatorname{old}}(s,a_i) =  \mathbb{E}_{\pi^{-i}_{\operatorname{old}}}[r(s,a_i,a_{-i}) + \gamma V^{\bm{\pi}_{
\operatorname{old}}}(s^\prime) - V^{\bm{\pi}_{
\operatorname{old}}}(s)] $ for the policy update, we can approximate $A^i_{\operatorname{old}}(s,a_i)$ with $\hat{A}^i(s,a_i) = r + \gamma V^{\pi^i}_{\pi^{-i}}(s^\prime) - V^{\pi^i}_{\pi^{-i}}(s)$, which is an unbiased estimate of $A^i_{\operatorname{old}}(s,a_i)$, though it may be with a large variance. In practice, we can follow the traditional idea in fully decentralized learning, and let each agent $i$ independently learn an individual value function $V^i(s)$. Then, we further have $\hat{A}^i(s,a_i) \approx r + \gamma V^i(s^\prime) - V^i(s)$. The loss for the decentralized critic is as follows:
\begin{equation}
    \mathcal{L}^i_{\operatorname{critic}} = \mathbb{E}\left[ (V^i(s) - y_i)^2 \right], \quad \text{where} \,\,  y_i = r + \gamma V^i(s^\prime) \text{\, or Monte Carlo return}.
    \label{eq:critic-obj}
\end{equation}
There may be some ways to improve the learning of this critic, which however is beyond the scope of our discussion.

\subsection{Decentralized Surrogate}

We are ready to introduce the decentralized surrogate. First, we derive our novel lower bound of the joint policy improvement by the following theorem. 

\begin{theorem}
    \label{theorem-1} 
    Suppose $\bm{\pi}_{\operatorname{old}}$ and $\bm{\pi}_{\operatorname{new}}$ are two joint policies. Then, the following bound holds:
    \begin{equation*}
         J({\bm{\pi}_{\operatorname{new}} } )-J({\bm{\pi}_{\operatorname{old}} } ) \ge \frac{1}{N} \sum_{i = 1}^{N} \mathcal{L}^{i}_{\bm{\pi}_{\operatorname{old}}}(\pi^i_{\operatorname{new}}) -  \tilde{M} \cdot \sum_{i = 1}^N \sqrt{D^{\max}_{\on{KL}}(\pi_{\on{old}}^{i} \Vert \pi_{\on{new}}^{i} )} -  C \cdot \sum_{i = 1}^N  D^{\max}_{\on{KL}}(\pi_{\on{old}}^{i} \Vert \pi_{\on{new}}^{i}),  
    \end{equation*}
    where $\tilde{M} = \frac{2\max_{s,\bm{a}}|A_{\on{old}}(s,\bm{a})|}{1 - \gamma}$ and $C = \frac{4\gamma\max_{s,\bm{a}}|A_{\on{old}}(s,\bm{a})|}{(1 - \gamma)^2}$ are two constants.
\end{theorem}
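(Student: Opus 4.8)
The plan is to start from the joint TRPO bound \eqref{eq:trpo-bound}, which already gives $J(\bm{\pi}_{\on{new}}) - J(\bm{\pi}_{\on{old}}) \ge \mathcal{L}^{\operatorname{joint}}_{\bm{\pi}_{\on{old}}}(\bm{\pi}_{\on{new}}) - C\cdot D^{\max}_{\on{KL}}(\bm{\pi}_{\on{old}} \| \bm{\pi}_{\on{new}})$ with $C=\frac{4\gamma\max_{s,\bm{a}}|A_{\on{old}}(s,\bm{a})|}{(1-\gamma)^2}$, and then to convert the two joint quantities on the right into the decentralized quantities of the theorem. The KL term is the easy one: since every joint policy factorizes as $\bm{\pi}(\bm{a}|s)=\prod_i \pi^i(a_i|s)$, the KL of a product is the sum of the per-factor KLs, so $D_{\on{KL}}(\bm{\pi}_{\on{old}}(\cdot|s)\|\bm{\pi}_{\on{new}}(\cdot|s)) = \sum_i D_{\on{KL}}(\pi^i_{\on{old}}(\cdot|s)\|\pi^i_{\on{new}}(\cdot|s))$; taking a maximum over $s$ and bounding a max of a sum by a sum of maxes yields $D^{\max}_{\on{KL}}(\bm{\pi}_{\on{old}}\|\bm{\pi}_{\on{new}}) \le \sum_i D^{\max}_{\on{KL}}(\pi^i_{\on{old}}\|\pi^i_{\on{new}})$, which produces the last term of the bound.

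The core of the argument is the surrogate term, and the decisive observation is where the factor $\tfrac{1}{N}$ comes from. For each agent $i$ define the single-agent-deviation joint policy $\bm{\pi}^{(i)} := (\pi^i_{\on{new}}, \pi^{-i}_{\on{old}})$ in which only agent $i$ switches to its new policy. A direct computation, using $A^i_{\on{old}}(s,a_i) = \mathbb{E}_{\pi^{-i}_{\on{old}}}[A_{\on{old}}(s,a_i,a_{-i})]$, shows that $\mathcal{L}^{\operatorname{joint}}_{\bm{\pi}_{\on{old}}}(\bm{\pi}^{(i)}) = \mathcal{L}^{i}_{\bm{\pi}_{\on{old}}}(\pi^i_{\on{new}})$. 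Because $\mathcal{L}^{\operatorname{joint}}_{\bm{\pi}_{\on{old}}}(\cdot)$ is \emph{linear} in the joint action distribution $\bm{\pi}(\cdot|s)$, evaluating it at the mixture $\bar{\bm{\pi}} := \frac{1}{N}\sum_{i=1}^N \bm{\pi}^{(i)}$ gives exactly $\mathcal{L}^{\operatorname{joint}}_{\bm{\pi}_{\on{old}}}(\bar{\bm{\pi}}) = \frac{1}{N}\sum_{i=1}^N \mathcal{L}^{i}_{\bm{\pi}_{\on{old}}}(\pi^i_{\on{new}})$, which is precisely the first term of the theorem. Hence it suffices to control the discrepancy $\mathcal{L}^{\operatorname{joint}}_{\bm{\pi}_{\on{old}}}(\bm{\pi}_{\on{new}}) - \mathcal{L}^{\operatorname{joint}}_{\bm{\pi}_{\on{old}}}(\bar{\bm{\pi}})$ between the true new joint policy and this mixture.

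To bound that discrepancy I would write it as $\sum_s \bm{\rho}_{\on{old}}(s)\sum_{\bm{a}}(\bm{\pi}_{\on{new}}(\bm{a}|s) - \bar{\bm{\pi}}(\bm{a}|s)) A_{\on{old}}(s,\bm{a})$ and pull out $\max_{s,\bm{a}}|A_{\on{old}}|$, leaving $\sum_s \bm{\rho}_{\on{old}}(s)$ times the $\ell_1$ distance $\sum_{\bm{a}}|\bm{\pi}_{\on{new}}(\bm{a}|s)-\bar{\bm{\pi}}(\bm{a}|s)| = 2 D_{\mathrm{TV}}(\bm{\pi}_{\on{new}}(\cdot|s),\bar{\bm{\pi}}(\cdot|s))$. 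Since $\bm{\pi}_{\on{new}}-\bar{\bm{\pi}} = \frac{1}{N}\sum_k(\bm{\pi}_{\on{new}}-\bm{\pi}^{(k)})$, the triangle inequality gives $D_{\mathrm{TV}}(\bm{\pi}_{\on{new}},\bar{\bm{\pi}}) \le \frac{1}{N}\sum_k D_{\mathrm{TV}}(\bm{\pi}_{\on{new}},\bm{\pi}^{(k)})$; as $\bm{\pi}_{\on{new}}$ and $\bm{\pi}^{(k)}$ share the common factor $\pi^k_{\on{new}}$ and differ only in the remaining $N-1$ factors, that factor cancels and subadditivity of TV over product measures yields $D_{\mathrm{TV}}(\bm{\pi}_{\on{new}},\bm{\pi}^{(k)}) \le \sum_{j\ne k} D_{\mathrm{TV}}(\pi^j_{\on{new}}(\cdot|s),\pi^j_{\on{old}}(\cdot|s))$. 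Applying Pinsker's inequality to each factor turns each TV into $\sqrt{\tfrac{1}{2}D_{\on{KL}}(\pi^j_{\on{old}}\|\pi^j_{\on{new}})} \le \sqrt{\tfrac{1}{2}D^{\max}_{\on{KL}}(\pi^j_{\on{old}}\|\pi^j_{\on{new}})}$, and summing $\bm{\rho}_{\on{old}}$ over states via $\sum_s \bm{\rho}_{\on{old}}(s) = \frac{1}{1-\gamma}$ collects the constant into $\tilde{M} = \frac{2\max_{s,\bm{a}}|A_{\on{old}}|}{1-\gamma}$, producing the middle term $-\tilde{M}\sum_i \sqrt{D^{\max}_{\on{KL}}(\pi^i_{\on{old}}\|\pi^i_{\on{new}})}$. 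Combining this with the KL step and the joint TRPO bound closes the argument.

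The step I expect to be the main obstacle is the total-variation bound between the product policy $\bm{\pi}_{\on{new}}$ and the mixture $\bar{\bm{\pi}}$: one must exploit the shared factor $\pi^k_{\on{new}}$ so that only $N-1$ TV terms survive per summand and the final per-agent dependence is $\sqrt{D^{\max}_{\on{KL}}}$ rather than something worse, and then verify that the bookkeeping of constants (the factor $2$ relating $\ell_1$ to TV, the factor $\tfrac{1}{2}$ from Pinsker, and the horizon factor $\frac{1}{1-\gamma}$) indeed closes into the stated $\tilde{M}$, with any remaining slack only loosening the subtracted term and hence preserving the lower bound.
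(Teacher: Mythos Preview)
Your proposal is correct and shares the same skeleton as the paper's proof: start from the joint TRPO bound \eqref{eq:trpo-bound}, recognize that $\mathcal{L}^{i}_{\bm{\pi}_{\on{old}}}(\pi^i_{\on{new}}) = \mathcal{L}^{\operatorname{joint}}_{\bm{\pi}_{\on{old}}}(\bm{\pi}^{(i)})$ for the single-deviation policy $\bm{\pi}^{(i)}=(\pi^i_{\on{new}},\pi^{-i}_{\on{old}})$, average over $i$, bound the remaining discrepancy via the TV distance between $\pi^{-i}_{\on{new}}$ and $\pi^{-i}_{\on{old}}$, and factorize the joint KL into per-agent KLs. Your mixture policy $\bar{\bm{\pi}}=\frac{1}{N}\sum_i\bm{\pi}^{(i)}$ is just a repackaging of the paper's averaging step $\mathcal{L}^{\operatorname{joint}}=\frac{1}{N}\sum_i\mathcal{L}^{\operatorname{joint}}$.

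The one genuine technical difference is how the TV term is turned into per-agent $\sqrt{D^{\max}_{\on{KL}}}$ terms. The paper applies Pinsker directly to the joint $\pi^{-i}$, obtaining $\sqrt{\sum_{j\ne i} D^{\max}_{\on{KL}}(\pi^j_{\on{old}}\Vert\pi^j_{\on{new}})}$, and then needs Cauchy--Schwarz over $i$ followed by $\sqrt{\sum_j a_j}\le\sum_j\sqrt{a_j}$ to separate the agents. You instead use subadditivity of TV for product measures \emph{before} Pinsker, getting $\sum_{j\ne i}\sqrt{D^{\max}_{\on{KL}}(\pi^j_{\on{old}}\Vert\pi^j_{\on{new}})}$ directly; averaging over $i$ then gives $\frac{N-1}{N}\sum_j\sqrt{D^{\max}_{\on{KL}}}$ without any Cauchy--Schwarz. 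Your route is slightly more elementary and produces the sharper prefactor $\frac{N-1}{N}$ (versus the paper's intermediate $\sqrt{(N-1)/N}$), though both drop this factor to reach the stated bound.
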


\begin{proof}
    We first consider $\mathcal{L}^{\operatorname{joint}}_{\bm{\pi}_{\operatorname{old}}}(\bm{\pi}_{\operatorname{new}}) - \mathcal{L}^{i}_{\bm{\pi}_{\operatorname{old}}}(\pi^i_{\operatorname{new}})$. According to \eqref{eq:trpo_l} and \eqref{eq:init-obj}, we have the following equation:
    \begin{align*}
        & \mathcal{L}^{\operatorname{joint}}_{\bm{\pi}_{\operatorname{old}}}(\bm{\pi}_{\operatorname{new}}) - \mathcal{L}^{i}_{\bm{\pi}_{\operatorname{old}}}(\pi^i_{\operatorname{new}}) \\
        & = \sum_{s} \bm{\rho}_{\operatorname{old}}(s) \sum_{a_i} \pi^i_{\operatorname{new}}(a_i|s) \bigg( \sum_{a_{-i}} \pi_{\operatorname{new}}^{-i}(a_{-i}|s)A_{\on{old}}(s,a_i,a_{-i}) -  A_{\operatorname{old}}^i(s,a_i) \bigg) \\
        & = \mathbb{E}_{\bm{\rho}_{\on{old}}}\mathbb{E}_{\pi^i_{\on{new}}}\bigg[ \sum_{a_{-i}}  \left( \pi^{-i}_{\on{new}}(a_{-i}|s) - \pi^{-i}_{\on{old}}(a_{-i}|s) \right) A_{\on{old}}(s,a_i,a_{-i})\bigg].
    \end{align*}
    
    Then, we have the following inequalities:
    \begin{align}
		& |\mathcal{L}^{\operatorname{joint}}_{\bm{\pi}_{\operatorname{old}}}(\bm{\pi}_{\operatorname{\on{new}}}) - \mathcal{L}^{i}_{\bm{\pi}_{\operatorname{old}}}(\pi^i_{\operatorname{new}})| \notag\\
		&  \le \mathbb{E}_{\bm{\rho}_{\on{old}}}\mathbb{E}_{\pi^i_{\on{new}}}\bigg[ \sum_{a_{-i}} | \pi^{-i}_{\on{old}}(a_{-i}|s) - \pi^{-i}_{\on{new}}(a_{-i}|s) | \ |A_{\on{old}}(s,a_i,a_{-i})|\bigg] \notag\\
		& \le \mathbb{E}_{\bm{\rho}_{\on{old}}}\mathbb{E}_{\pi^i_{\on{new}}} \bigg[ M \sum_{a_{-i}} | \pi^{-i}_{\on{old}}(a_{-i}|s) - \pi^{-i}_{\on{new}}(a_{-i}|s) |  \bigg] \quad\quad (M = \max_{s,\bm{a}}|A_{\on{old}}(s,\bm{a})|) \notag\\
		& = 2M \mathbb{E}_{\bm{\rho}_{\on{old}}} \left[ D_{\operatorname{TV}}(\pi_{\on{old}}^{-i}(\cdot|s)\Vert \pi_{\on{new}}^{-i}(\cdot|s)) \right]  \notag\\
		& \le  \frac{2M}{1-\gamma}\max_{s}   D_{\operatorname{TV}}(\pi_{\on{old}}^{-i}(\cdot|s)\Vert \pi_{\on{new}}^{-i}(\cdot|s))  \notag\\
		& = \tilde{M} D_{\operatorname{TV}}^{\max}(\pi_{\on{old}}^{-i} \Vert \pi_{\on{new}}^{-i}) \quad\quad (\tilde{M} = \frac{2M}{1-\gamma}) \notag\\
		& \le \tilde{M} \sqrt{D_{\operatorname{KL}}^{\max}(\pi_{\on{old}}^{-i} \Vert \pi_{\on{new}}^{-i})} \label{eq:tv-kl} \\
		& \le \tilde{M} \sqrt{\sum_{j \not = i} D^{\max}_{\operatorname{KL}}(\pi_{\operatorname{old}}^{j} \Vert \pi_{\operatorname{new}}^{j} ) }, \label{eq:kl-sum}
	\end{align}
    where \eqref{eq:tv-kl} is from the relationship between the total variance and KL-divergence that $D_{\operatorname{TV}}(p\Vert q)^2 \le D_{\operatorname{KL}}(p\Vert q)$ \citep{TRPO}, and \eqref{eq:kl-sum} is a property of the KL-divergence, which can be obtained as follows,
    \begin{align}
        D_{\operatorname{KL}}^{\max}(\bm{\pi}_{\operatorname{old}} || \bm{\pi}_{\operatorname{new}}) & = \max_s D_{\operatorname{KL}}(\bm{\pi}_{\operatorname{old}}(\cdot|s) || \bm{\pi}_{\operatorname{new}}(\cdot|s)) \notag \\
        & =\max_s  \sum_i D_{\operatorname{KL}}(\pi^i_{\operatorname{old}}(\cdot|s) || \pi^i_{\operatorname{new}}(\cdot|s)) \notag \\
        & \le \sum_i \max_s D_{\operatorname{KL}}(\pi^i_{\operatorname{old}}(\cdot|s) || \pi^i_{\operatorname{new}}(\cdot|s)) \notag \\
        & = \sum_i D_{\operatorname{KL}}^{\max}(\pi^i_{\operatorname{old}} || \pi^i_{\operatorname{new}}). \label{eq:kl-property}
    \end{align}
    From \eqref{eq:kl-sum}, we can further obtain the following inequality,
    \begin{equation}
        \mathcal{L}^{\operatorname{joint}}_{\bm{\pi}_{\operatorname{old}}}(\bm{\pi}_{\operatorname{new}}) - \mathcal{L}^{i}_{\bm{\pi}_{\operatorname{old}}}(\pi^i_{\operatorname{new}}) \ge -\tilde{M} \sqrt{\sum_{j \not = i} D^{\max}_{\operatorname{KL}}(\pi_{\on{old}}^{j} \Vert \pi_{\on{new}}^{j} ) }. \label{eq:coro}
    \end{equation}
    
    Next we will prove this theorem, starting from \eqref{eq:trpo-bound}, 
    \begin{align}
	& J({\bm{\pi}_{\operatorname{new}} } )-J({\bm{\pi}_{\operatorname{old}} } ) \ge \mathcal{L}^{\operatorname{joint}}_{\bm{\pi}_{\operatorname{old}}}(\bm{\pi}_{\operatorname{new}}) - C \cdot D_{\operatorname{KL}}^{\operatorname{\max}}(\bm{\pi}_{\operatorname{old}} || \bm{\pi}_{\operatorname{new}}) \notag \\
	& = \frac{1}{N} \sum_{i = 1}^{N} \mathcal{L}^{\operatorname{joint}}_{\bm{\pi}_{\operatorname{old}}}(\bm{\pi}_{\operatorname{new}}) -  C \cdot D_{\operatorname{KL}}^{\operatorname{\max}}(\bm{\pi}_{\operatorname{old}} || \bm{\pi}_{\operatorname{new}}) \notag \\
	& \label{eq:step1} \ge \frac{1}{N} \sum_{i = 1}^{N} \mathcal{L}^{i}_{\bm{\pi}_{\operatorname{old}}}(\pi^i_{\operatorname{new}}) - \frac{\tilde{M}}{N}\sum_{i = 1}^{N}\sqrt{\sum_{j \not = i} D^{\max}_{\on{KL}}(\pi_{\on{old}}^{j} \Vert \pi_{\on{new}}^{j} ) } -  C \cdot D_{\on{KL}}^{\max}(\bm{\pi}_{\on{new}}\Vert\bm{\pi}_{\on{old}}) \\
	& \label{eq:step2} \ge \frac{1}{N} \sum_{i = 1}^{N} \mathcal{L}^{i}_{\bm{\pi}_{\operatorname{old}}}(\pi^i_{\operatorname{new}}) - \tilde{M} \sqrt{\frac{N-1}{N} \sum_{i = 1}^N D^{\max}_{\on{KL}}(\pi_{\on{old}}^{i} \Vert \pi_{\on{new}}^{i} ) } -  C \cdot D_{\on{KL}}^{\max}(\bm{\pi}_{\on{new}}\Vert\bm{\pi}_{\on{old}})  \\ 
	& \label{eq:step3} \ge \frac{1}{N} \sum_{i = 1}^{N} \mathcal{L}^{i}_{\bm{\pi}_{\operatorname{old}}}(\pi^i_{\operatorname{new}}) - \tilde{M} \sqrt{\frac{N-1}{N} \sum_{i = 1}^N D^{\max}_{\on{KL}}(\pi_{\on{old}}^{i} \Vert \pi_{\on{new}}^{i} ) } -  C \cdot \sum_{i = 1}^N  D^{\max}_{\on{KL}}(\pi_{\on{old}}^{i} \Vert \pi_{\on{new}}^{i} )  \\ 
	& \ge \frac{1}{N} \sum_{i = 1}^{N} \mathcal{L}^{i}_{\bm{\pi}_{\operatorname{old}}}(\pi^i_{\operatorname{new}}) - \tilde{M} \sqrt{ \sum_{i = 1}^N D^{\max}_{\on{KL}}(\pi_{\on{old}}^{i} \Vert \pi_{\on{new}}^{i} ) } -  C \cdot \sum_{i = 1}^N  D^{\max}_{\on{KL}}(\pi_{\on{old}}^{i} \Vert \pi_{\on{new}}^{i} )  \notag \\ 
	& \label{eq:step6} \ge \frac{1}{N} \sum_{i = 1}^{N} \mathcal{L}^{i}_{\bm{\pi}_{\operatorname{old}}}(\pi^i_{\operatorname{new}}) -  \tilde{M} \cdot \sum_{i = 1}^N \sqrt{D^{\max}_{\on{KL}}(\pi_{\on{old}}^{i} \Vert \pi_{\on{new}}^{i} )} -  C \cdot \sum_{i = 1}^N  D^{\max}_{\on{KL}}(\pi_{\on{old}}^{i} \Vert \pi_{\on{new}}^{i} ). 
	\end{align}
	The inequality \eqref{eq:step1} is the direct application of the inequality \eqref{eq:coro}. The inequality \eqref{eq:step2} is from the Cauthy-Schwarz inequality,
	\begin{align*}
	    \sum_{i = 1}^{N}\sqrt{\sum_{j \not = i} D^{\max}_{\on{KL}}(\pi_{\on{old}}^{j} \Vert \pi_{\on{new}}^{j} ) } & \le \sqrt{N \sum_{i = 1} \sum_{j \not = i} D^{\max}_{\on{KL}}(\pi_{\on{old}}^{j} \Vert \pi_{\on{new}}^{j}}) \\
	    & = \sqrt{N(N-1)\sum_{i = 1}D^{\max}_{\on{KL}}(\pi_{\on{old}}^{i} \Vert \pi_{\on{new}}^{i}}).
	\end{align*}
    The inequality \eqref{eq:step3} is from \eqref{eq:kl-property}, while the inequality \eqref{eq:step6} is from the simple inequality $\sqrt{\sum_i{a_i}} \le \sum_i \sqrt{a_i} \,\, (a_i \ge 0 ,\ \forall i)$.
\end{proof}

The lower bound in Theorem \ref{theorem-1} is dedicated to decentralized policy optimization, because it can be directly decomposed individually for each agent as a decentralized surrogate. From Theorem \ref{theorem-1}, if we set the policy optimization objective of each agent $i$ as 
\begin{equation}
    \pi_{\operatorname{new}}^i = \arg \max_{\pi^i} \Big(\frac{1}{N}\mathcal{L}^{i}_{\bm{\pi}_{\operatorname{old}}}(\pi^i)- \tilde{M} \cdot \sqrt{D^{\max}_{\on{KL}}(\pi_{\on{old}}^{i} \Vert \pi^i )} - C \cdot D^{\max}_{\on{KL}}(\pi_{\on{old}}^{i} \Vert \pi^i) \Big) ,
    \label{eq:de-obj}
\end{equation} 
then we have $J({\bm{\pi}_{\operatorname{new}} } ) \ge J({\bm{\pi}_{\operatorname{old}} })$ from TRPO \citep{TRPO}. Moreover, as the objective $J(\bm{\pi})$ is bounded, the convergence of $\{J(\bm{\pi}^t)\}$ is guaranteed, where $\bm{\pi}^t$ is the joint policy after $t$ iterations according to \eqref{eq:de-obj}. \textbf{\textit{Therefore, the joint policy of agents improves monotonically and converges to sub-optimum by fully decentralized policy optimization, i.e., independent learning.}} Note that this result is under the assumption that each agent can obtain the state, and in practice each agent will take the individual trajectory $\tau_i$ as the approximation to the state.

\subsection{Algorithm}
\label{sec:algo}

DPO is with a simple idea that each agent optimizes the decentralized surrogate \eqref{eq:de-obj}. However, we face the same trouble as TRPO that the constant $\tilde{M}$ and $C$ are large and if we directly optimize this objective, then the step size of the policy update will be small. 

To settle this problem, we absorb the idea of the adaptive coefficient in PPO \citep{PPO}. We use two adaptive coefficients $\beta_1^i$ and $\beta_2^i$ to replace the constant $\tilde{M}$ and $C$ and additionally replace the maximum KL-divergence with the average KL-divergence. In practice, we will actually optimize the following objective
\begin{equation}
    \pi_{\operatorname{new}}^i = \arg \max_{\pi^i} \Big( \frac{1}{N}\mathcal{L}^{i}_{\bm{\pi}_{\operatorname{old}}}(\pi^i)- \beta_1^i  \sqrt{D^{\on{avg}}_{\on{KL}}(\pi_{\on{old}}^{i} \Vert \pi^{i} )} - \beta_2^i D^{\on{avg}}_{\on{KL}}(\pi_{\on{old}}^{i} \Vert \pi^i ) \Big),
    \label{eq:adaptive-obj}
\end{equation}
where $D^{\on{avg}}_{\on{KL}}(\pi_{\on{old}}^{i} \Vert \pi^{i} ) = \mathbb{E}_{s \sim \bm{\pi_{\on{old}}}}\left[ D_{\on{KL}}(\pi_{\on{old}}^{i}(\cdot|s) \Vert \pi^{i}(\cdot|s) ) \right]$.

As for the adaption of $\beta_1^i$ and $\beta_2^i$, we need to define a hyperparameter $d_{target}$, which can be seen as a ruler for the average KL-divergence $D^{\on{avg}}_{\on{KL}}(\pi_{\on{old}}^{i} \Vert \pi_{\on{new}}^{i} )$ for each agent.
If $D^{\on{avg}}_{\on{KL}}(\pi_{\on{old}}^{i} \Vert \pi_{\on{new}}^{i} )$ is close to $d_{target}$, then we believe current $\beta_1^i$ and $\beta_2^i$ are appropriate. If $D^{\on{avg}}_{\on{KL}}(\pi_{\on{old}}^{i} \Vert \pi_{\on{new}}^{i} )$ exceeds $d_{target}$ too much, we believe $\beta_1^i$ and $\beta_2^i$ are small and need to increase and vice versa. In practice, we will use the following rule to update $\beta_1^i$ and $\beta_2^i$:
\begin{equation}
    \begin{aligned}
     & \text{If} \  D^{\on{avg}}_{\on{KL}}(\pi_{\on{old}}^i \Vert \pi^i_{\on{new}} ) >  d_{target}* \delta , \quad \text{then} \  \beta^i_j \leftarrow \beta^i_j * \omega \quad \forall j \in \{1,2\} \\ 
    & \text{If} \  D^{\on{avg}}_{\on{KL}}(\pi^i_{\on{old}} \Vert \pi^i_{\on{new}} ) <  d_{target} / \delta , \quad \text{then} \  \beta^i_j \leftarrow \beta^i_j / \omega \quad \forall j \in \{1,2\}. 
    \end{aligned}
    \label{eq:adaptive-rule}
\end{equation}
We choose the constants $\delta=1.5$ and $\omega = 2$ as the choice in PPO \citep{PPO}. As for the critic, we just follow the standard method in PPO. Then, we can have the fully decentralized learning procedure of DPO for each agent $i$ in Algorithm \ref{alg:1}.

\begin{algorithm}[t]
    \centering
    \caption{\textbf{DPO}}
    \label{alg:1}
    \begin{algorithmic}[1]
        	\For{episode = $1$ to $M$}
			\For{$t = 1$ to max\_episode\_length}
			\State select action $a_i \sim \pi^i(\cdot|s)$ \
			\State execute action $a_i$ and observe reward $r$ and next state $s^{\prime}$\
			\State collect $\langle s,a_i,r,s' \rangle$
			\EndFor
			\State Update decentralized critic according to \eqref{eq:critic-obj}\
			\State Update policy according to the surrogate \eqref{eq:adaptive-obj}	\
			\State Update $\beta^i_1$ and $\beta^i_2$ according to \eqref{eq:adaptive-rule}.
			\EndFor
    \end{algorithmic}
\end{algorithm}

The practical algorithm of DPO actually uses some approximations to the decentralized surrogate. Most of these approximations are traditional practice in RL or with no alternative in fully decentralized learning yet. We admit that the practical algorithm may not maintain the theoretical guarantee. However, we need to argue that we go one step further to give a decentralized surrogate in fully decentralized learning with convergence guarantee. We believe and expect that a better practical method can be found based on this objective in future work.

\section{Experiments}

In this section, we compare the practical algorithm of DPO with IPPO \citep{IPPO} in a variety of cooperative multi-agent environments, including a cooperative stochastic game, MPE \citep{MADDPG}, multi-agent MuJoCo \citep{mamujoco}, and SMAC \citep{SMAC}, covering both discrete and continuous action spaces, and fully and partially observable environments. As we consider fully decentralized learning, in the experiments \textbf{\textit{agents do not use parameter-sharing}} as sharing parameters should be considered as centralized learning \citep{terry2020revisiting}. In all experiments, the network architectures and common hyperparameters of DPO and IPPO are the same for a fair comparison. More details about experimental settings and hyperparameters are available in Appendix. Moreover, all the learning curves are from 5 random seeds and the shaded area corresponds to the 95\% confidence interval.

\subsection{A Didactic Example}

\begin{figure}[t]
\vspace{-2mm}
    \centering
    \begin{subfigure}{0.37\linewidth}
        \centering
        \includegraphics[width=.95\linewidth]{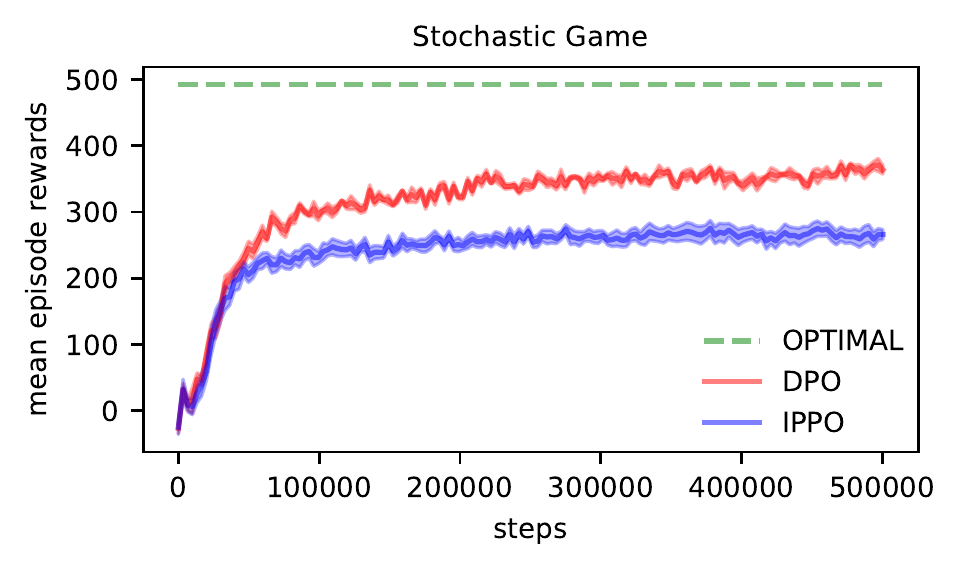}
        \vspace{-2mm}
        \caption{DPO compared with IPPO}
        \label{fig:matrix_performance}
    \end{subfigure}
    \begin{subfigure}{0.37\linewidth}
        \centering
        \includegraphics[width=.95\linewidth]{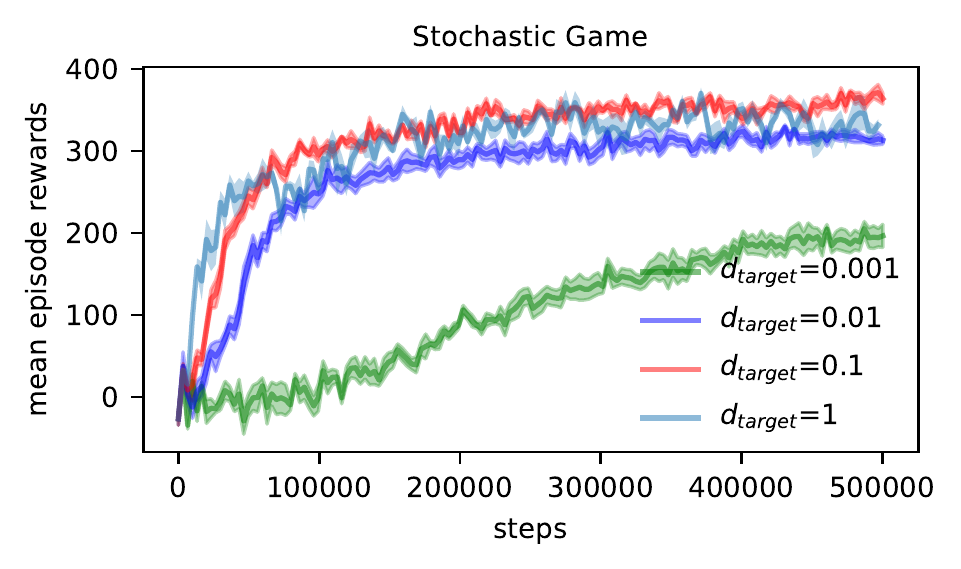}
        \vspace{-2mm}
        \caption{the influence of $d_{{target}}$}
        \label{fig:target}
    \end{subfigure}
    \caption{Empirical studies of DPO on the didactic example: (a) learning curve of DPO compared with IPPO and the global optimum; (b) the influence of different values of $d_{{target}}$ on DPO, x-axis is environment steps.}
    \label{fig:matrix}
\end{figure}

First, we use a cooperative stochastic game as a didactic example. The cooperative stochastic game is with 100 states, 6 agents and each agent has 5 actions. All the agents share a joint reward function. The reward function and the transition probability are both generated randomly. 
This stochastic game has a certain degree of complexity which is helpful to distinguish the performance of DPO and IPPO. On the other hand, this environment is tabular which means training in this environment is fast and we can do ablation studies efficiently. Moreover, we can find the global optimum by dynamic programming to compare with in this game. 

The learning curves in Figure~\ref{fig:matrix_performance} show that DPO performs better than IPPO and learns a better solution in this environment. The fact that DPO learns a sub-optimal solution agrees with our theoretical result. However, the sub-optimal solution found by DPO is still away from the global optimum. This means that there is still improvement space. 

On the other hand, we study the influence of the hyperparameter $d_{{target}}$ on DPO. We choose $d_{{target}} = 0.001,0.01,0.1,1$. The empirical results are shown in Figure~\ref{fig:target}. We find that when $d_{{target}}$ is small, the coefficient $\beta_1$ and $\beta_2$ are more likely to be increased and the step size of the policy update is limited. So for the case that $d_{{target}} = 0.001,0.01$, the performance of DPO is relatively low. And when $d_{{target}}$ is large, the policy update may be out of the trust region. This can be witnessed by the fluctuating learning curve of the case $d_{{target}} = 1$. So we need to choose an appropriate value for $d_{{target}}$ and in this environment we choose $d_{{target}} = 0.1$, which is also the learning curve of DPO in Figure~\ref{fig:matrix_performance}. We found that the appropriate value for $d_{{target}}$ changes in different environments. In the following, we keep $d_{{target}}$ to be the same for tasks of the same environment. There may be some better choices for $d_{{target}}$, but it is a bit time-consuming and out of the range of our discussion. 

\subsection{MPE}

\begin{figure}[t]
    \vspace{-2mm}
    \centering
    \includegraphics[width=.95\textwidth]{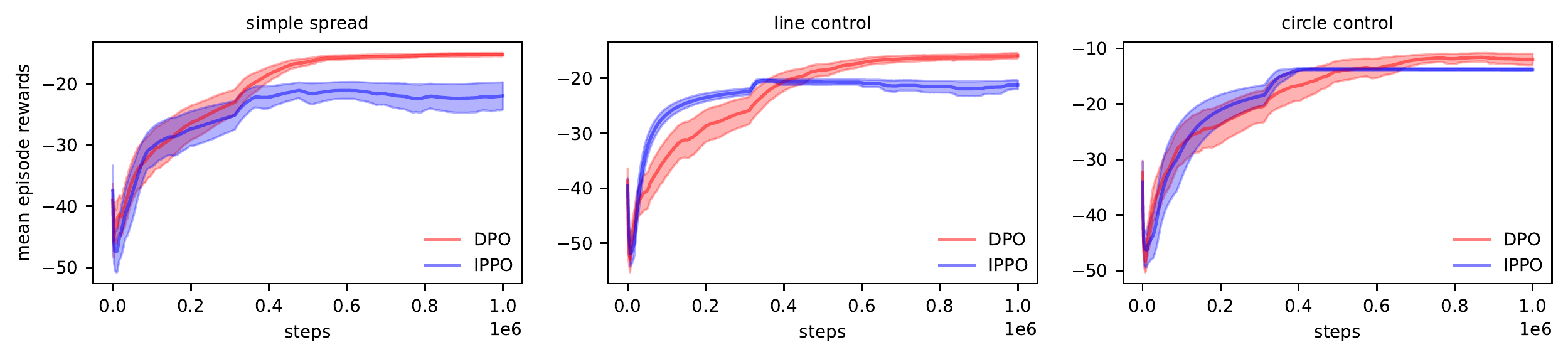}
    \vspace{-2mm}
    \caption{Learning curve of DPO compared with IPPO in 5-agent simple spread, 5-agent line control, and 5-agent circle control in MPE, where x-axis is environment steps.}
    \label{fig:mpe}
    \vspace{-2mm}
\end{figure}

MPE is a popular environment in cooperative MARL. MPE is a 2D environment and the objects in MPE environment are either agents or landmarks. Landmark is a part of the environment, while agents can move in any direction. With the relation between agents and landmarks, we can design different tasks. We use the discrete action space version of MPE and the agents can accelerate or decelerate in the direction of x-axis or y-axis. We choose MPE for its partial observability. We take $d_{{target}} = 0.01$ for all MPE tasks.

The MPE tasks we used for the experiments are simple spread, line control, and circle control which are originally used in \citet{transfer}. In our experiments, we set the number of agents $N = 5$ in all these three tasks. The empirical results are illustrated in Figure~\ref{fig:mpe}. We can find that although DPO may fall behind IPPO at the beginning of the training in some tasks, DPO learns a better policy in the end for all three tasks.

\subsection{Multi-Agent MuJoCo}

\begin{figure}[t]
    \vspace{-2mm}
    \centering
    \includegraphics[width=1\textwidth]{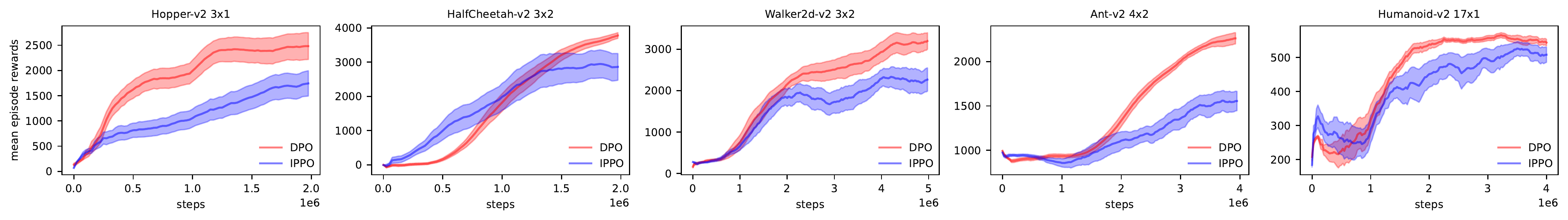}
    \vspace{-5mm}
    \caption{Learning curve of DPO compared with IPPO in 3-agent Hopper, 3-agent HalfCheetah, 3-agent Walker2d,  4-agent Ant, and 17-agent Humanoid in multi-agent MuJoCo, where x-axis is environment steps. }
    \label{fig:mujoco}
    \vspace{-2mm}
\end{figure}

Multi-agent MuJoCo is a robotic locomotion control environment for multi-agent settings, which is built upon single-agent MuJoCo \citep{todorov2012mujoco}. In multi-agent MuJoCo, each agent controls one part of a robot to carry out different tasks. We choose this environment for the reason of continuous state and action spaces. We select 5 tasks for our experiments: 3-agent Hopper, 3-agent HalfCheetah, 3-agent Walker2d, 4-agent Ant and 17-agent Humanoid. We take $d_{{target}} = 0.001$ for all multi-agent MuJoCo tasks.

The empirical results are illustrated in Figure~\ref{fig:mujoco}. We can find that in all five tasks, DPO outperforms IPPO, though in 3-agent HalfCheetah DPO learns slower than IPPO at the beginning. The results on multi-agent MuJoCo verify that DPO is also effective in facing continuous state and action spaces. Moreover, the better performance of DPO in the 17-agent Humanoid task could be evidence of the scalability of DPO.

\subsection{SMAC}

SMAC is a partially observable and high-dimensional environment that has been used in many cooperative MARL studies. We select five maps in SMAC, 2s3z, 8m, 3s5z, 27m\_vs\_30m, and MMM2 for our experiments. We take $d_{{target}} = 0.02$ for all SMAC tasks.

The empirical results are illustrated in Figure~\ref{fig:smac}. The two super hard SMAC tasks (27m\_vs\_30m and MMM2) are too difficult for both DPO and IPPO to win, so we use episode reward as the metric to show their difference. DPO performs better than IPPO in all five maps. We need to argue that though we have controlled the network architectures of DPO and IPPO to be the same, in our experiments each agent has its individual parameters which increases the difficulty of training. So our results in SMAC may be different from other works. Although IPPO has been shown to perform well in SMAC \citep{IPPO,MAPPO,benchmark}, DPO can still outperform IPPO, which verifies the effectiveness of the practical algorithm of DPO in high-dimensional complex tasks and can also be evidence of our theoretical result. Again, the better performance of DPO in 27m\_vs\_30m shows its good scalability in the task with many agents.

\begin{figure}[t]
    \centering
    \includegraphics[width=1.0\textwidth]{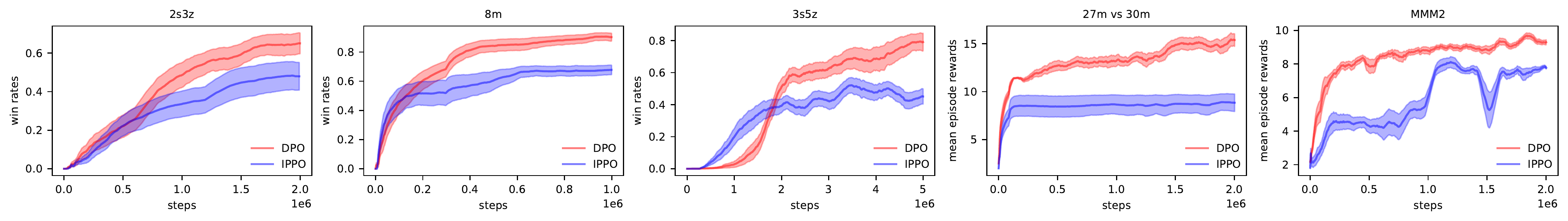}
    \vspace{-5mm}
    \caption{Learning curve of DPO compared with IPPO in 2s3z, 8m, 3s5z, 27m\_vs\_30m, and MMM2 in SMAC, where x-axis is environment steps.}
    \label{fig:smac}
    \vspace{-2mm}
\end{figure}

\section{Conclusion}
In this paper, we investigate fully decentralized learning in cooperative multi-agent reinforcement learning. We derive a novel decentralized lower bound for the joint policy improvement and we propose DPO, a fully decentralized actor-critic algorithm with convergence guarantee and monotonic improvement. Empirically, we test DPO compared with IPPO in a variety of environments including a cooperative stochastic game, MPE, multi-agent MuJoCo, and SMAC, covering both discrete and continuous action spaces, and fully and partially observable environments. The empirical results show the advantage of DPO over IPPO, which can be evidence for our theoretical results.

\bibliography{reference}
\bibliographystyle{preprint}

\newpage
\appendix

\section{Experimental Settings}
\label{app:exp}

\subsection{MPE}
\label{app:mpe}
The three tasks are built on the origin MPE \citep{MADDPG} (MIT license) and are originally used in \citet{transfer} (MIT license). The objective in these three tasks are listed as follows:

\begin{itemize}
    \item \textbf{Simple Spread:} There are $N$ agents who need to occupy the locations of $N$ landmarks. 
    \item \textbf{Line Control:} There are $N$ agents who need to line up between 2 landmarks.
    \item \textbf{Circle Control:} There are $N$ agents who need to form a circle around a landmark.
\end{itemize}

The reward in these tasks is the distance between all the agents and their target locations. We set the number of agents $N = 5$ for these three tasks in our experiment. 

\subsection{Multi-Agent MuJoCo}
Multi-agent MuJoCo \citep{mamujoco} (Apache-2.0 license) is a robotic locomotion task with continuous action space for multi-agent settings. The robot could be divided into several parts and each part contains several joints. Agents in this environment control a part of the robot which could be different varieties. So the type of the robot and the assignment of the joints decide a task. For example, the task `HalfCheetah-3$\times$2' means dividing the robot `HalfCheetah' into three parts for three agents and each part contains 2 joints. 

The details about our experiment settings in multi-agent Mujoco are listed in Table \ref{tab:mujoco}.  The configuration defines the number of agents and the joints of each agent. The `agent obsk' defines the number of nearest agents an agent can observe. 
\begin{table}[h]
    \centering
    \caption{The task settings of multi-agent MuJoCo}
    \label{tab:mujoco}
    \begin{tabular}{c|c|c}
    \toprule
         task & configuration & agent obsk \\
         \midrule
         HalfCheetah & 3$\times$2 & 2 \\
         Hopper & 3$\times$1 & 2 \\
         Walker2d & 3$\times$2 & 2 \\
         Ant & 4$\times$2 & 2 \\
    \bottomrule
\end{tabular}
    
\end{table}

\section{Training Details}
\label{app:train}

Our code is based on the open-source code\footnote{https://github.com/marlbenchmark/on-policy} of MAPPO \citep{MAPPO} (MIT license). We modify the code for individual parameters and ban the tricks used by MAPPO for SMAC. The network architectures and base hyperparameters of DPO and IPPO are the same for all the tasks in all the environments. We use 3-layer MLPs for the actor and the critic and use ReLU as non-linearities. The number of the hidden units of the MLP is 128.  We train all the networks with an Adam optimizer. The learning rates of the actor and critic are both 5e-4. The number of epochs for every batch of samples is 15 which is the recommended value in \citet{MAPPO}. For IPPO, the clip parameter is 0.2 which is the same as \citet{PPO}. For DPO, the initial values of the coefficient $\beta^i_1$ and $\beta^i_2$ are 0.01. The value of $d_{\operatorname{target}}$ is 0.1 for the cooperative stochastic game, 0.01 for MPE, 0.001 for multi-agent MuJoCo, and 0.02 for SMAC.  

The version of the game StarCraft2 in SMAC is 4.10 for our experiments in all the SMAC tasks. 
We set the episode length of all the multi-agent MuJoCo tasks as 1000 in all of our multi-agent MuJoCo experiments. 
We perform the whole experiment with a total of four NVIDIA A100 GPUs.
We have summarized the hyperparameters in Table \ref{tab:hyperparameter}.

\begin{table}[h]
    \centering
    \caption{Hyperparameters for all the experiments}
    \label{tab:hyperparameter}
    \begin{tabular}{cc}
    \toprule
         hyperparameter & value\\
         \midrule
         MLP layers & 3 \\
         hidden size & 128 \\
         non-linear & ReLU \\
         optimizer & Adam \\
         actor\_lr & 5e-4  \\
         critic\_lr & 5e-4 \\
         numbers of epochs & 15 \\
         initial $\beta^i_1$ & 0.01 \\
         initial $\beta^i_2$ & 0.01 \\
         $\delta$ & 1.5  \\
         $\omega$ & 2   \\
         $d_{\on{target}}$ & different for environments as aforementioned  \\ 
         clip parameter for IPPO  & 0.2 \\
    \bottomrule
    \end{tabular}
\end{table}
\label{app:hyper}

\section{Additional Results}
\label{app:more}

\citet{PPO} actually proposed two versions of PPO. The first version, which is also the most popular version, is with the clip trick. The second version is directly optimizing the penalty formula with adaptive coefficients and we refer to this algorithm as PPO-KL. IPPO \citep{IPPO} is actually extended from the first version, while the practical algorithm of DPO is similar to the second version. The main difference between DPO and PPO-KL is the term of the square root of the KL-divergence in the policy loss. We modify IPPO by making each agent learn with PPO-KL to obtain IPPO-KL. 
On the other hand, independent Q-learning (IQL) \citep{IQL} is a classic independent learning algorithm. IQL could obtain good performance in some tasks but it is not with any theoretical guarantee, to the best of our knowledge. 

\begin{figure}[h]
    \centering
    \includegraphics[width=0.45\textwidth]{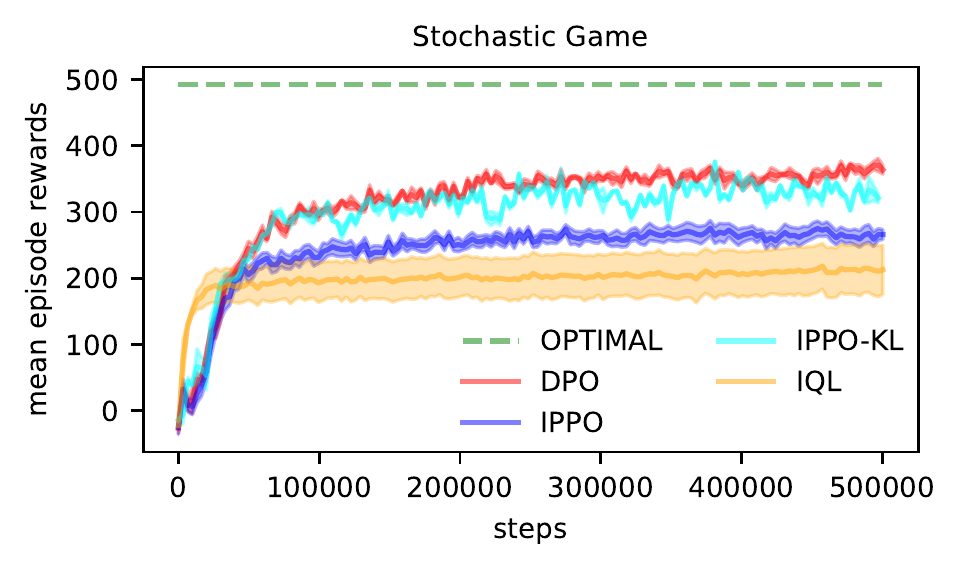}
    \caption{Learning curve of DPO compared with IPPO, IPPO-KL, IQL, and the global optimum in the cooperative stochastic game, where x-axis is environment steps. }
    \label{fig:matrix2}
\end{figure}

For the completeness of our experiments, we test the performance of IPPO-KL and IQL in the cooperative stochastic game and the empirical result is illustrated in Figure~\ref{fig:matrix2}. We find that the performance of IPPO-KL is close to DPO but a little bit lower and more unstable. This can be explained as the policy loss of IPPO-KL is actually a biased approximation of DPO, which omits the square root term. The theoretical guarantee of DPO may also help IPPO-KL perform better than IPPO in this game, while the bias in the policy loss of IPPO-KL makes it perform worse than DPO. As for IQL, its performance is lower than IPPO though it converges faster. Since IQL performs worse than IPPO in such a didactic environment and IQL is a value-based algorithm and is out of the scope of our discussion, we do not take IQL as a baseline in other environments. Moreover, we think the essential relations among IPPO, IPPO-KL, and DPO can be clearly witnessed from the empirical results in the cooperative stochastic game. For other tasks, we focus on the mainstream version of IPPO as in \citet{IPPO,MAPPO,benchmark}.

Besides our empirical results, we would like to share our views on the difference between DPO and IPPO and give some intuitive ideas. KL regularization and ratio clipping are similar in the single-agent setting, but they are not supposed to be similar in multi-agent settings. The `correct' ratio clipping in multi-agent setting according to the theory of PPO should clip over the joint policy ratio $\frac{ \boldsymbol{\pi_{\operatorname{new}}} (\boldsymbol{a}|s) }{\boldsymbol{\pi_{\operatorname{old}} } (\boldsymbol{a}|s)}$. IPPO just clips individual policy ratio $\frac{\pi^i_{\operatorname{new}}(a_i|s)}{\pi^i_{\operatorname{old}}(a_i|s)}$ for each agent $i$ which may not be enough to realize the `correct' ratio clipping.  We could find more discussion about this in the CoPPO \citep{COPPO} paper. So IPPO is not supposed to enjoy the theoretical results of DPO.

We could rewrite the objective of IPPO for agent $i$ with a similar formulation in HPO \citep{HPO} as follows:
$$
\mathcal{L}^{i,\operatorname{IPPO}}_{\boldsymbol{\pi}_{\operatorname{old}}}(\pi^i_{\operatorname{new}}) = \sum_{s} \boldsymbol{\rho}_{\operatorname{old}}(s) \sum_{a_i} \pi^i_{\operatorname{new}}(a_i|s) |A^i_{\operatorname{old}}(s,a_i)|l\left( \operatorname{sign}(A^i_{\operatorname{old}}(s,a_i)), u_i(s,a_i)- 1, \epsilon \right),
$$
where $l(y,x,\epsilon) = \operatorname{max}\{0,\epsilon - y\times x\}$  is the hinge loss and $u_i(s,a_i) = \frac{\pi^i_{\operatorname{new}}(a_i|s)}{\pi^i_{\operatorname{old}}(a_i|s)}$ is the ratio.

If we follow the same idea as PPO, then IPPO is the `correct' ratio clipping version for the surrogate of DPO. However, the effectiveness of this ratio clipping formulation in theory is still open in decentralized learning since there is not any convergence guarantee for IPPO, to the best of our knowledge. 

Though the effectiveness of IPPO in theory is beyond the scope of our paper, we could provide an intuitive explanation for the fact that the performance of DPO can surpass IPPO from this formulation and the analysis in HPO. In the proof of HPO, there is a critical assumption that the sign of the estimated advantage is the same as that of the true advantage (Assumption 4 in Section 2.3 in \citet{HPO}). And HPO also shows that the sign of the advantage is more important than the value for this formulation of PPO-clip. In decentralized learning, both DPO and IPPO are facing the difficulty of learning the individual advantage function as there may be noise in the individual value function. However, the objective of DPO is continuous and the objective of IPPO  is discrete for $\operatorname{sign}(A^i_{\operatorname{old}}(s,a_i))$. So the impact of the noise in the value function may be larger on IPPO than DPO.

\section{Discussion}

In the paper, we derive a novel lower bound that can be naturally divided into independent surrogate \eqref{eq:de-obj} for each agent. By each agent optimizing this surrogate, the monotonic improvement of the joint policy can be guaranteed in fully decentralized settings. However, the practical algorithm of DPO takes the formula of \eqref{eq:adaptive-obj} with several approximations. How to solve the optimization of \eqref{eq:de-obj} more precisely is left as future work.

Moreover, we expect our work could provide some insights for future studies on fully decentralized multi-agent reinforcement learning, since current methods still have a gap from the optimum as shown in Figure~\ref{fig:matrix2}.


\end{document}